\newtheorem{definition}{Definition}
\newtheorem{theorem}{Theorem}
\newcommand{\bs}{\mathbf}
\newcommand{\mc}{\mathcal}
\newcommand{\defeq}{\mathrel{\mathop:}=}
\newcommand{\argmin}{\operatornamewithlimits{argmin}}
\newcommand{\subto}{\operatornamewithlimits{s.t.}}
\ificcvfinal\pagestyle{empty}\fi
\begin{document}

\title{Hierarchical Image Peeling: A Flexible Scale-space Filtering Framework}

\author{Yuanbin Fu\\
Tianjin University\\
{\tt\small yuanbinfu@tju.edu.cn}
\and
Xiaojie Guo\\
Tianjin University\\
{\tt\small xj.max.guo@gmail.com}
\and
Qiming Hu \\
Tianjin University\\
{\tt\small  huqiming@tju.edu.cn}
\and
Di Lin \\
Tianjin University\\
{\tt\small  ande.lin1988@gmail.com}
\and
Jiayi Ma \\
Wuhan University\\
{\tt\small   jyma2010@gmail.com}
\and
Haibin Ling \\
Stony Brook University\\
{\tt\small   hling@cs.stonybrook.edu}
}

\maketitle
\ificcvfinal\thispagestyle{empty}\fi

\begin{abstract}
   The importance of hierarchical image organization has been witnessed by a wide spectrum of applications in computer vision and graphics. Different from image segmentation with the spatial whole-part consideration, this work designs a modern framework for disassembling an image into a family of derived signals from a scale-space perspective. Specifically, we first offer a formal definition of image disassembly. Then, by concerning desired properties, such as peeling hierarchy and structure preservation, we convert the original complex problem into a series of two-component separation sub-problems, significantly reducing the complexity. The proposed framework is flexible to both supervised and unsupervised settings. A compact recurrent network, namely hierarchical image peeling net, is customized to efficiently and effectively fulfill the task, which is about 3.5Mb in size, and can handle 1080p images in more than 60 fps per recurrence on a GTX 2080Ti GPU, making it attractive for practical use. Both theoretical findings and experimental results are provided to demonstrate the efficacy of the proposed framework, reveal its superiority over other state-of-the-art alternatives, and show its potential to various applicable scenarios. Our code is available at \url{https://github.com/ForawardStar/HIPe}.
\end{abstract}
%

\begin{figure}
\centering
	\begin{subfigure}{0.48\linewidth}
		\includegraphics[width=\linewidth]{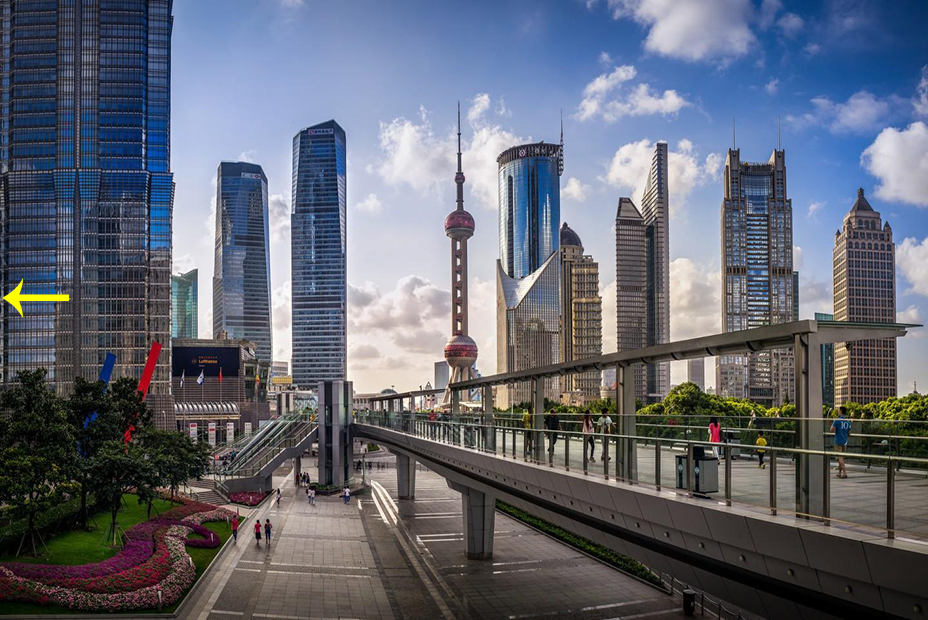}
		\subcaption{Input}
	\end{subfigure}
	\begin{subfigure}{0.48\linewidth}
		\includegraphics[width=\linewidth]{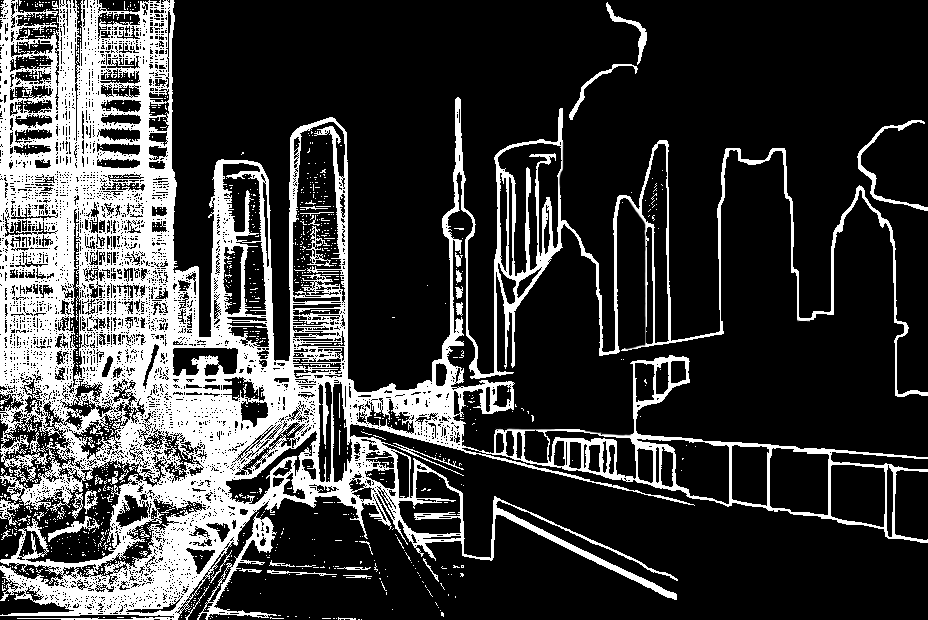}
		\subcaption{Edge guidance}
	\end{subfigure}
	\begin{subfigure}{0.48\linewidth}
		\includegraphics[width=\linewidth]{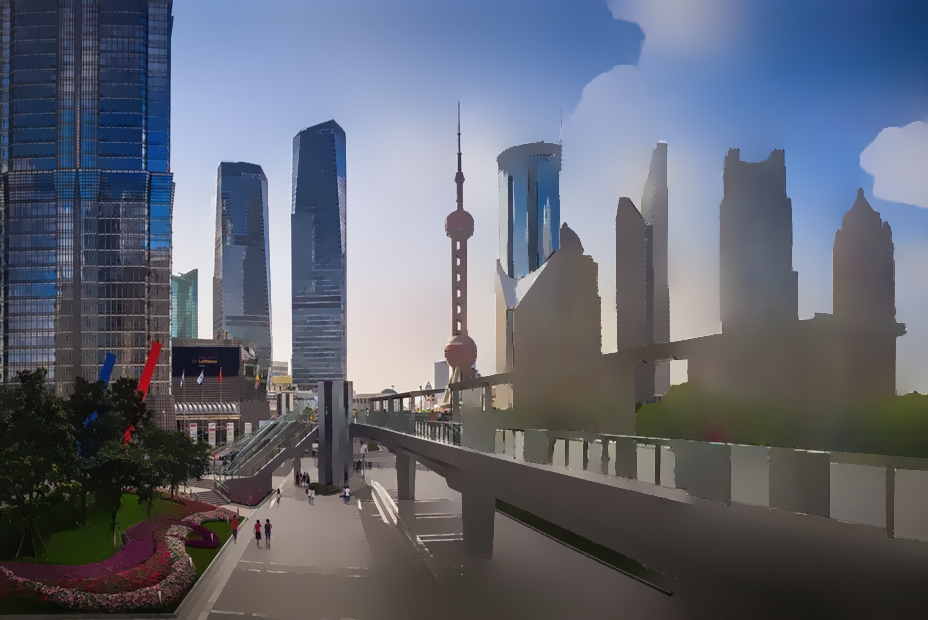}
		\subcaption{ Filtered result}
	\end{subfigure}
		\begin{subfigure}{0.48\linewidth}
		\includegraphics[width=\linewidth]{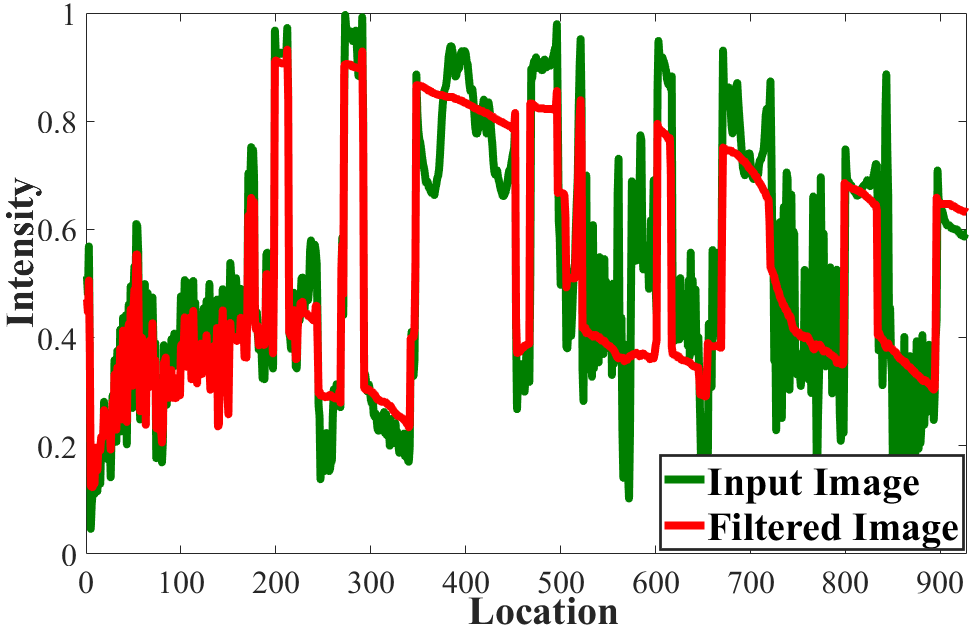}
		\subcaption{1-D signals of intensity}
	\end{subfigure}
	\caption{A peeling example by referring a manually-edited edge guidance with a gradually-changed scale in spatial. (d) corresponds to the row indicated by the yellow arrow in (a).}
	\vspace{-5pt}
		\label{first_EdgeGui}
\end{figure}
\section{Introduction}
Within the past decades, increasing attention to hierarchically organizing images has been drawn from the communities of computer vision and multimedia, by concerning the principle of perceptual systems.  For example, an image can be spatially segmented into a set of object instances or super-pixels \cite{Gao_2019_ICCV,Khan_2019_ICCV,XiaofengAAAI2020,Wang_2020_CVPR,Dual_2020_CVPR,FanZTSX20,FuHZQ20}, which serve as primitives for further processing. Different from the spatial whole-part perspective, this paper concentrates on another organization manner from a scale-space/information eliciting perspective. We call this task {\it image disassembly} for distinguishing it from image segmentation, the formal definition of which is given as follows:
	\begin{definition} [\textbf{Image Disassembly}]
		Given an image $I$, a family of constituent components $\mc{C}\defeq\{C_1,C_2,...,C_n\}$ of $I$ such that $I=\sum_{i=1}^{n}C_i$ is called an image disassembly.
		\label{def:ID}
	\end{definition}
	\noindent The above definition comprises all possible image disassembly strategies. The problem is highly ill-posed since the number of unknowns to recover is $n$ times as many as given measurements. Therefore, we need to impose additional priors or constraints on the desired solution for $C_i$s.

    Since a long time ago, the significance of multi-scale representations of images has been verified, which derives the idea of scale-space filtering. According to human-vision mechanism and scale-space theory, one typically acquires different information at various scales. Simply speaking, a larger scale provides more about structural message of a certain scene, while a smaller one more about textural information. In the literature, a variety of image filtering approaches attempt to separate images into structure and texture components. However, given different images and tasks, hardly a sound way exists for determining which scales are correct or best ones. Furthermore, one may desire a result containing spatially different scales in one case. Please see Figure \ref{first_EdgeGui} and \ref{first_pic} as examples. Rather than eliminating the ambiguity in scale and seeking an optimal image separation, dissembling images in a hierarchical, flexible, and compact fashion is more desired. It should be a very useful feature for practical use in various multimedia, computer vision and graphics applications, such as image restoration \cite{BF2,JIR}, image stylization \cite{Sty,cao2011automatic}, stereo matching \cite{lou2005real,SM06,SM13}, optical flow \cite{OF12,OF15} and semantic flow \cite{SF15,SF16}, where users expect to adjust and select the most satisfactory results.

    For ease of explanation, we respectively denote $I^{t}\defeq\sum_{i=t+1}^nC_i$ and $P^{t}\defeq\sum_{i=1}^tC_i$, such that $I=I^{t}+P^{t}$. Taking the case shown in Fig. \ref{first_pic} as an example, the information left after substracting $t$ of $n$ $C_i$s from the input $I$, say $I^t$, is regarded as the filtering/peeling result. To achieve the above mentioned versatile utility, we advocate the following properties:
	\begin{itemize}
		\item \textbf{Peeling Hierarchy.} For $\forall~~t\in\{1,2,...,n-1\}$, $I^{t}$ should not generate any spurious details passing from $I^{t-1}$ to keep the fidelity of peeling, while more information of $I$ should be contained by $P^{t}$ than $P^{t-1}$.
		\item \textbf{Structure Preservation.} $I^{t}$ should provide a concise yet dominant structure of $I^{t-1}$, and thus of $I$. The edges maintained in $I^{t}$ should be sharp while the rest regions keep as flatten as possible.
		\item \textbf{Flexibility.} In comparison with tuning numerical parameters,  to instruct/constrain the peeling procedure, a more intuitive and flexible way, {\it e.g.} adopting perceptually meaningful cues, might be preferred by users.
		\item \textbf{Model Efficiency.} More and more tasks are expected to be executed in a timely fashion. As a core module of processing, aside from the effectiveness, the efficiency of model is critical.
	\end{itemize}

	\begin{figure}[t]
		\centering
		\includegraphics[width=\linewidth]{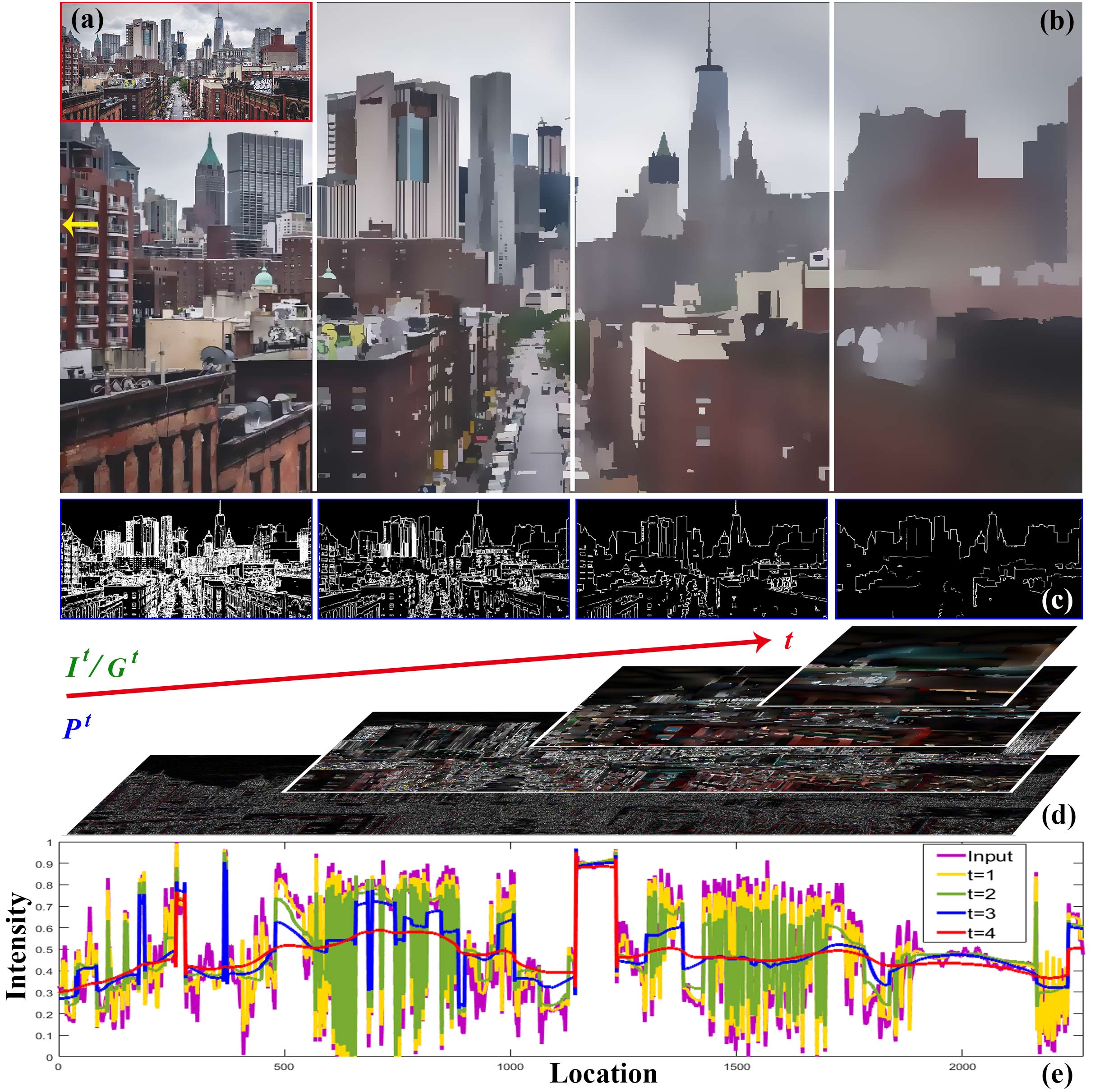}
		\vspace{-10pt}
		\caption{An example of hierarchical image peeling. (b)-(d) respectively show the filtered results $I^t$, estimated guidance maps $G^t$, and peeled information $P^t$ of input image (a) at different scales. (e) depicts the 1-D signals corresponding to the row indicated by the yellow arrow in (b).}
		\vspace{-15pt}
		\label{first_pic}
	\end{figure}

	\subsection{Related Work}
The earliest attempt on scale-space filtering may trace back to \cite{SSF}, which obtains a group of derived images by convolving the original image with different Gaussian kernels. Despite the simplicity, its isotropic nature causes the content blind side. In order to mitigate the isotropic issue, Perona and Malik \cite{anSSF} proposed an anisotropic diffusion method, which tries to keep boundaries sharp and coincided with the semantically meaningful boundaries at each scale. The works along this technical line implicitly or explicitly comprise of two core logical functions: \emph{one is to gradually seek the filtering guidance}, {\it e.g.} Gaussian kernel adjustment in \cite{SSF}, and indicator estimation in \cite{anSSF}, while \emph{the other one executes the filtering based on the determined indicator}.
	
	With the indicator selected by tuning numerical parameters, a large body of research focused on the image filtering function. For instance, the bilateral filter (BF) \cite{BF1,BF2} averages the neighbors of each pixel using the weights calculated by the Gaussian distances in spatial and intensity. 
Though preserving dominant edges, BF frequently produces gradient reversal artifacts and halos \cite{RGA,WLS}. Rolling guidance filter (RGF) \cite{RGF} finds strong edges from the previous iteration to guide the next one, of which the main drawback is the inaccurate edge localization similar to \cite{SSF}. In spite of the mechanism difference between BF and RGF, they are in nature \emph{local operators, making them suffer from the oscillating effect} \cite{muGIF}. To address the local issue, many global approaches have been developed. 
As a representative, 
the $\ell_0$ gradient minimization (L0) \cite{L0} employs the $\ell_0$ norm to constrain preserved edge pixels in the filtered results. But, some isolated spots often appear in their results. The follow-ups attempt to overcome such weakness, by considering different regularizers like \cite{2015L1,semiWLS,RWLS}, and spatial neighborhoods like \cite{RTV}. Recently, some literatures \cite{2017Robust,muGIF} employ two kinds of information to guide the filtering \cite{2017Robust}. 
Although the global methods can generate satisfied results in most cases, they typically require multiple iterations to converge in optimization, each of which involves \emph{computationally expensive operators}, {\it e.g.} the inverse of large matrices, \emph{limiting their applicability to real-time tasks}.

	In comparison, with the emergence of deep neural networks, the processing in the testing phase could be largely accelerated because only feed-forward operations are needed  \cite{deep_joint,DCNN,Recur,generic,2017Fast,2017Deep}. The pioneering work in this category goes to \cite{DCNN}, which trains a network to speed up the procedure. Similarly, the work of \cite{Recur} achieves the acceleration via proposing a hybrid neural network. Both of these two models \emph{can merely output one result closest to the (either gradient or image) reference generated by a target filter with one specific parameter configuration, among multiple candidates with distinct scales}. There exists an unsupervised fashion proposed by \cite{unspuervise}, which however is still unable to produce multiple results of an image with different filtering levels. More recently, several approaches have been developed for providing the option of altering the smoothing strength, without need of retraining the network for different parameter configurations. Specifically, Kim {\it et al.} \cite{2018Structure} alternatively learned a deep solver to the computationally expensive operator in the algorithm,
which allows to vary the smoothing degree by adjusting the parameters. Fan {\it et al.} \cite{fan2019decouple} designed a decoupled algorithm 
to dynamically adjust the weights of a deep network for image operators. Nevertheless, for all the above mentioned traditional and deep-learning methods, \emph{the filtering extent needs to be adjusted, if allowed, by carefully tuning numerical parameters, which is not so intuitive and perceptually meaningful to users in practice}. In addition, none of previous works can handle cases like Fig. \ref{first_EdgeGui}, with spatially-variant scales.

	A simple way for hierarchically disassembling images is to recursively feed the intermediate result from the previous iteration by one existing filter (base) as a new input to the next round with manually tweaked parameters. This manner is undoubtedly cumbersome, inevitably facing to the same shortcomings of the base, {\it e.g.} the requirement of network retraining, the high computational complexity, and the non-intuitive parameter searching for multiple times. Hence, a more faithful strategy is expected, which is our goal.
	
	\subsection{Contribution}
	
	Based on the previous analysis, it can be known that existing scale-space filtering approaches solely achieve one or two of our advocated properties, thus they may not be the best choices for practical use. We aim at building an efficient framework to gradually peel images by simultaneously considering the desired four properties. The main contributions of this paper can be summarized as follows:
    \begin{enumerate}
		\item We define a general image organization problem from a scale-space perspective. We also provide theoretical analysis to simplify the problem, based on which, the original complex problem is converted into a series of sub-problems, significantly reducing the complexity.
		\item We customize a compact recurrent framework, {\it i.e. hierarchical image peeling}, which adopts the perceptually
meaningful cues, {\it i.e. edge map $G^t$}, as the guidance to indicate the peeling procedure (shown in Figure \ref{first_EdgeGui} and \ref{first_pic}) and can be trained with or without paired edge and ground truth filtered image data (supervised or unsupervised training).
		\item Extensive experiments are conducted to demonstrate the efficacy of the proposed framework, reveal its superiority over other state-of-the-art alternatives, and show its potential to various applications.
	\end{enumerate}

	\section{Methodology}

	\subsection{Problem Analysis}
    The main goal of this paper can be expressed by Definition \ref{def:ID} with peeling hierarchy and structure preservation satisfied (\emph{specified feasibility}). We employ the first-order
	derivative of a component, namely $\nabla C_i$, to reflect its detail/structure information. Directly solving such a problem is way too hard due to the complex relationship among multiple components. To make the problem easier to deal with, we simplify it as a series of two-component decomposition sub-problems, based on the following theorem.
	\begin{theorem}[\textbf{Sequential Peeling}]
		Suppose, for any $t$, $[C_t,I^{t}]$ is a feasible solution to separating $I^{t-1}$ into two components. The sequential separation results are also feasible to the original hierarchical image peeling problem.
	\end{theorem}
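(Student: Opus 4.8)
The plan is to argue by induction on the peeling index $t$, turning the assumed chain of local two-component separations into global feasibility of $\{C_1,\dots,C_n\}$ for the original problem. The backbone is a telescoping identity: by hypothesis each step writes $I^{t-1}=C_t+I^t$, and since $I^0=\sum_{i=1}^n C_i = I$ while $I^n=0$, unrolling yields $I = C_1+I^1 = C_1+C_2+I^2 = \cdots = \sum_{i=1}^n C_i$, so the reconstruction constraint of Definition~\ref{def:ID} holds automatically. The reconstruction part therefore needs no real work, and the content of the theorem lies entirely in propagating the peeling-hierarchy and structure-preservation properties from the per-step level to the whole sequence.

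For peeling hierarchy I would treat the two sub-conditions separately. The ``no spurious detail'' requirement is already stated for every $t$ individually, so the local feasibility of separating $I^{t-1}$ into $[C_t,I^t]$ — which by assumption guarantees exactly that $I^t$ introduces no detail absent from $I^{t-1}$ — coincides with the global requirement, step by step, with nothing to add. For the monotone-information part, observe $P^t = P^{t-1}+C_t$; because each local separation peels off a genuine component $C_t$ out of $I^{t-1}$, the accumulated $P^t$ contains $P^{t-1}$ together with the newly removed information, giving the desired inclusion that $P^t$ carries more of $I$ than $P^{t-1}$. Both directions thus reduce to the per-step feasibility without extra hypotheses.

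The crux, which I expect to be the main obstacle, is structure preservation. Locally we only know that $I^t$ is a concise, dominant structure of its \emph{immediate} predecessor $I^{t-1}$, whereas the theorem asks that $I^t$ be a dominant structure of $I$ itself, and indeed of every earlier $I^{s}$ with $s<t$. I would formalize ``dominant structure'' through the gradient surrogate $\nabla C_i$ already used in the paper: roughly, the edge set kept sharp in $I^t$ is a subset of that of $I^{t-1}$ while the complementary regions are flattened, i.e.\ $|\nabla I^t|$ is dominated by $|\nabla I^{t-1}|$ in the appropriate pointwise or energy sense. The key lemma is then \emph{transitivity} of this domination relation along the chain $I = I^0 \succeq I^1 \succeq \cdots \succeq I^t$: composing ``subset of sharp edges'' with ``flattening of the rest'' must again produce a subset of sharp edges and a flatter remainder. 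Establishing this cleanly — in particular ruling out that flattening at one scale could resurrect or sharpen an edge at a later scale, and verifying that ``conciseness'' is preserved rather than degraded under composition — is where the real argument lives. Once this transitivity is secured, a short induction on $t$ closes the proof.
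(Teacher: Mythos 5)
Your skeleton matches the paper's: the reconstruction constraint telescopes for free, the hierarchy condition is already local, and the whole burden falls on propagating structure preservation down the chain. The gap is that you stop exactly where the proof begins: you pose the transitivity of your ``domination'' relation as the key lemma and explicitly leave it open, worrying in particular that flattening at one scale could resurrect or sharpen an edge at a later scale. That worry is already foreclosed by the hypotheses, and seeing this is the entire content of the paper's argument. The peeling-hierarchy property is formalized as: the set of non-zero elements of $\nabla I^{t+1}$ is a subset of that of $\nabla I^{t}$ (no spurious detail can appear), while structure preservation at step $t$ is formalized as the uncorrelation $\nabla C_t\perp\nabla I^{t}$, i.e., disjoint gradient supports. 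Since $\nabla C_{t+1}=\nabla I^{t}-\nabla I^{t+1}$ is supported inside the support of $\nabla I^{t}$, disjointness of $\nabla C_t$ from $\nabla I^{t}$ immediately gives both $\nabla C_t\perp\nabla C_{t+1}$ and $\nabla C_t\perp\nabla I^{t+1}$; no edge can reappear at a later scale because, by hypothesis, every non-zero gradient element of $I^{t+1}$ was already a non-zero gradient element of $I^{t}$. Inducting along the chain yields $\nabla C_1\perp\cdots\perp\nabla C_{t+1}\perp\nabla I^{t+1}$ and hence $\nabla P^{t+1}\perp\nabla I^{t+1}$, which is precisely the global feasibility claim. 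So your plan is the right one and follows the same route as the paper, but the lemma you defer is not a hard residual obstacle --- it is a short consequence of the support nesting built into per-step feasibility --- and without supplying it your proposal remains an outline rather than a proof.
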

	\begin{proof}
		According to the peeling hierarchy property, the set of non-zero elements in $\nabla I^{t}$ should be a subset of that in $\nabla I^{t-1}$. Together with the structure preservation, the uncorrelation between $\nabla C_t$ and $\nabla I^{t}$ should be guaranteed, denoted by $\nabla C_t\perp\nabla I^{t}$. Having $I^{t}=I^{t+1}+C_{t+1}$ and $\nabla C_{t+1}\perp\nabla I^{t+1}$ yields $\nabla C_t\perp\nabla I^{t+1}+\nabla C_{t+1}$ and $\nabla C_t\perp\nabla C_{t+1}\perp\nabla I^{t+1}$. In the sequel, both $\nabla C_{1}\perp...\perp\nabla C_{t+1}\perp\nabla I^{t+1}$ and $\nabla P^{t+1}\perp \nabla I^{t+1}$ hold, which establish the claim.
	\end{proof}
	
	The above finding boils down the original image disassembly problem to a sequential processing, naturally motivating us to design a recurrent strategy. Each recurrent unit performs structure preserving image peeling to a controllable extent. Let us here concentrate on the unit function, which can be written usually in the following shape:
	\begin{equation}
	\argmin_{C_{t},I^{t}} \Phi(C_{t})+\alpha\Psi(I^{t})~~ \subto~~ I^{t-1}=I^{t}+C_{t},
	\end{equation}
	where $\Phi(\cdot)$ and $\Psi(\cdot)$ are the regularizers on $C_{t}$ and $I^{t}$ respectively, which can be $\ell_1$ or $\ell_2$ depending on different demands. Moreover, $\alpha$ is a parameter controlling the filtering/peeling strength. \emph{Most of traditional methods}, such as \cite{muGIF,2017Robust,L0,RTV}, on the one hand, \emph{involve computationally expensive operations}, like inverse of large matrices, limiting their applicability. Equipped with deep learning techniques, one can alternatively train a neural network $\mathscr{P}$ from the input-output viewpoint, say $[C_t,I^t]\leftarrow\mathscr{P}(I^{t-1},\alpha)$, to mimic related operators and the whole procedure. Once the network is trained, the inference could be accomplished at cheap expenses.
	On the other hand, \emph{using a numerical parameter $\alpha$ to indicate the peeling extent is not intuitive and difficult to tune}. A perceptually meaningful and faithful guidance is more practical. Among possible cues, edge might be a good candidate to act as the indicator thanks to its simple yet semantic characteristic. Unfortunately, in most situations, the expected edges at each scale/recurrent stage are unknown in advance. What we can do is to seek a ``best" predict of the edges for guiding the generation of $I^{t}$ from the input $I^{t-1}$, {\it i.e.} $G^t\leftarrow\mathscr{G}(I^{t-1})$. Putting the aforementioned concerns together yields a strategy that recurrently solves $[C_t,I^t]\leftarrow\mathscr{P}(I^{t-1},G^t)$. Please notice that our strategy also allows edge maps constructed by users, and can be generalized to other possible types of guidance. We emphasize that none of existing works can produce a filtering result adhering to the edge guidance like the case in Fig. \ref{first_EdgeGui}.
	
	\begin{figure}[t]
		\centering
		\includegraphics[width=\linewidth]{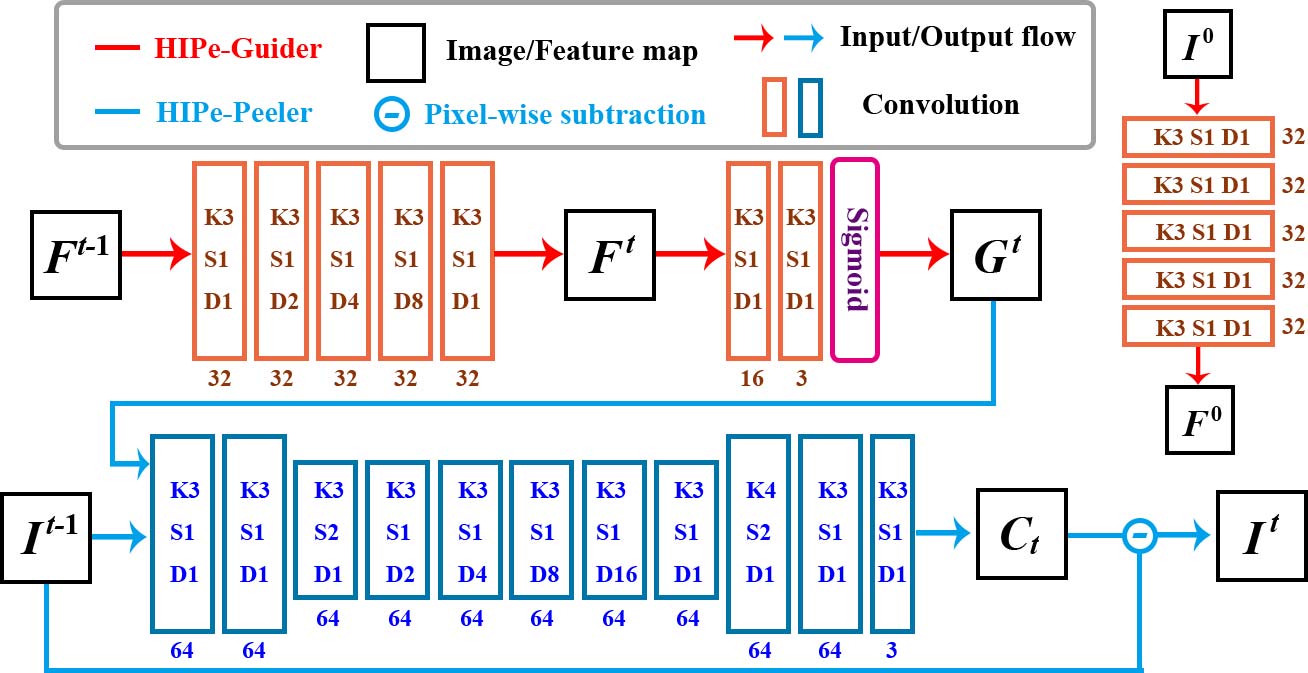}
		\vspace{-10pt}
		\caption{The architecture of proposed HIPe-Net.The number under each block represents how many channels generated, while the letters $K$, $S$ and $D$ indicate the size of kernel, the stride and the dilation rate, respectively.}	
		\vspace{-5pt}
		\label{arch}
	\end{figure}
	
	\subsection{Hierarchical Image Peeling Network}
	Figure \ref{arch} shows the architecture of our hierarchical image peeling network (HIPe-Net for short), which performs in a recurrent fashion and consists of two logical modules. One module $\mathscr{G}$ responds to the guidance prediction, while the other $\mathscr{P}$ takes care of the peeling functionality \emph{conditioned on} the given/predicted guidance.
	By this logical partition, the two modules can be greatly decoupled, thus further simplifying the problem. In addition, both the model reduction and training procedure can benefit from the partition, because the original space is considerably restricted. \\
	
	\textbf{HIPe-Peeler $\mathscr{P}$.} Generally, our framework can not only learn the behavior of any existing filter, like muGIF \cite{muGIF}, RTV \cite{RTV}, and L0 \cite{L0}, but also be trained in an unsupervised manner (no edge and expected filtered image pair). {At each recurrent step, the core mission of the peeler is to produce a result from the input, which should strictly adhere to the guidance}, no matter what the guidance looks like only if reasonable. At the moment, the guidance $G^{t}$ is assumed being already at hand. We will see how to form the guidance shortly. Due to the hard constraint $I^{t-1}=I^{t}+C_t$, the peeler could map the input $I^{t-1}$ only to one component, $C_{t}$ or $I^t$. Then, the other component can be consequently obtained by subtracting the mapped result from the input. For better considering contextual features, the peeler network ought to catch relatively large receptive fields. To this end, we follow \cite{unspuervise}\cite{CAN} by utilizing dilated convolutions to progressively enlarge the receptive field with the dilation rate exponentially increasing, instead of resorting to a deeper network, to cut model size. Please see the peeler part in Figure \ref{arch} for more details.\\
	
	\textbf{HIPe-Guider $\mathscr{G}$.}
	For the input $I^{t-1}$, the expected filtering result can be obtained by setting the parameters of a filter to $\alpha^t$ (for different scales, $\alpha^t$ may vary), {\it i.e.} ${I^{t}_{\mathscr{F}}}\leftarrow\mathscr{F}(I^{t-1},\alpha^t)$ where ${I^{t}_{\mathscr{F}}}$ denotes the ground truth. The main job of the guider is to build the connection between the numerical $\alpha^t$ and its corresponding perceptual guidance (edge in this work). To achieve the goal, the gradient map of ${I^{t}_{\mathscr{F}}}$ can be computed as guidance, namely $\nabla{I^{t}_{\mathscr{F}}}$. Still, there exists a gap between the gradient and the concept of edge. A proper edge map ought to be semantically meaningful and binary to respectively reflect the perceptual subjective, and avoid the ambiguity about whether a pixel is on edge or not.
	For filling the gap, on the one hand, similar to HIPe-Peeler, we introduce the dilation convolutions with increasing dilation rates to learn the contextual features, adopting which sufficiently improves the perception of semantic objects. On the other hand, a Sigmoid layer is then concatenated to enforce the output to be closely binary. Please see the guider part in Figure \ref{arch} for details. However, the ground truth $I^{t}_{\mathscr{F}}$ is not always available during training. Hence, the gradient map of $I^{t}_{\mathscr{F}}$ should be approximated with the gradient of the raw input $I^0=I$. 
By reusing the gradient information of the raw input image, and if possible, its corresponding annotated edge, a series of $\nabla{\hat{I}^{t}_{\mathscr{F}}}$s with different scales can be built to perform as a rational guider.

%
 Note that directly using an off-the-shelf deep edge detection models like \cite{HED}\cite{RCF}\cite{He_2019_CVPR} to produce edge guidance will lead to large parameter space, because these deep learning approaches require the multi-scale features generated from multiple layers of (very) deep pre-trained networks, like AlexNet \cite{AlexNet} or VGG16 \cite{VGG16}, to obtain multi-scale representations \textbf{(different params. for different scales)}. As shown in Figure \ref{arch}, such an issue can be mitigated in our framework, since we reusing the network parameters, and recurrently take the previous output features $F^{t-1}$ as the input again to get the features $F^t$ with larger receptive fields \textbf{(shared params. for different scales)}. Experiments will reveal that our solution can reach promising performance.

	\begin{figure*}[t]
			\centering

		\begin{subfigure}{0.246\linewidth}
			\includegraphics[width=1\linewidth]{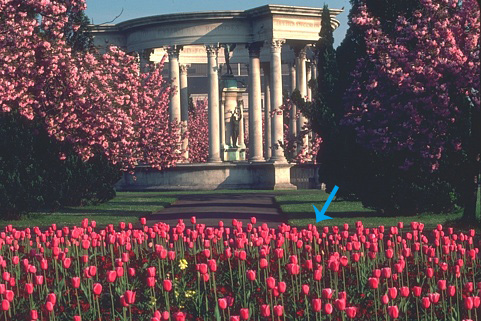}
			\subcaption*{\small{Input}}
		\end{subfigure}
		\begin{subfigure}{0.246\linewidth}
		\includegraphics[width=1\linewidth]{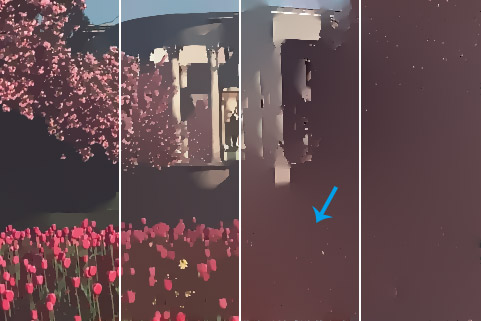}
		\subcaption*{\small{L0}}
	\end{subfigure}
	\begin{subfigure}{0.246\linewidth}
	\includegraphics[width=1\linewidth]{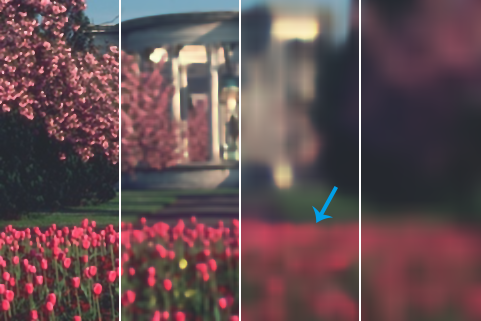}
	\subcaption*{\small{RGF}}
\end{subfigure}
\begin{subfigure}{0.246\linewidth}
	\includegraphics[width=1\linewidth]{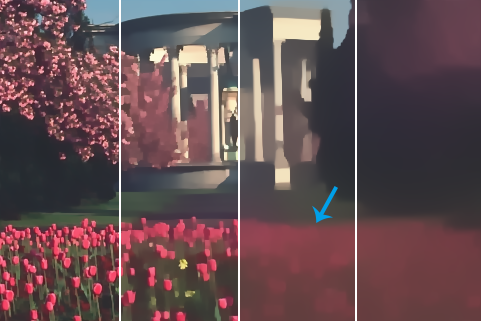}
	\subcaption*{\small{RTV}}
\end{subfigure}\\

		\begin{subfigure}{0.246\linewidth}
	\includegraphics[width=1\linewidth]{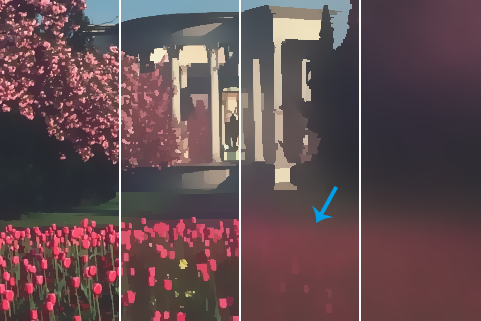}
	\subcaption*{\small{muGIF}}
\end{subfigure}
\begin{subfigure}{0.246\linewidth}
	\includegraphics[width=1\linewidth]{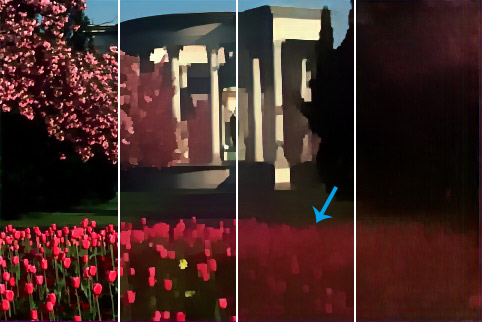}
	\subcaption*{\small{PIO}}
\end{subfigure}
\begin{subfigure}{0.246\linewidth}
	\includegraphics[width=1\linewidth]{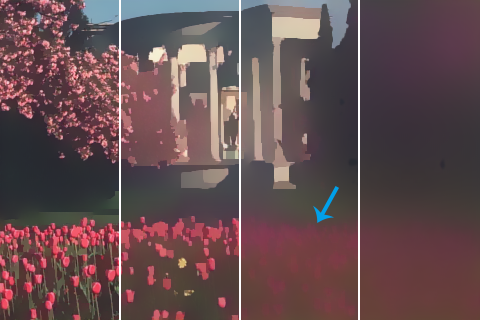}
	\subcaption*{\small{Ours-S}}
\end{subfigure}
\begin{subfigure}{0.246\linewidth}
	\includegraphics[width=1\linewidth]{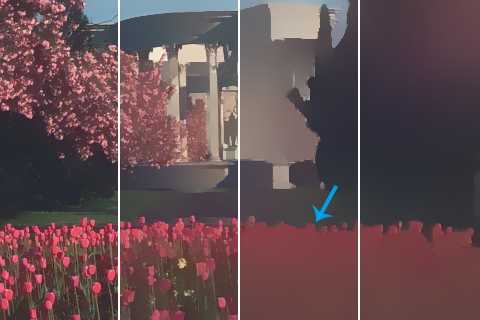}
	\subcaption*{\small{Ours}}
\end{subfigure}
~\\
		\begin{subfigure}{0.246\linewidth}
			\includegraphics[width=1\linewidth]{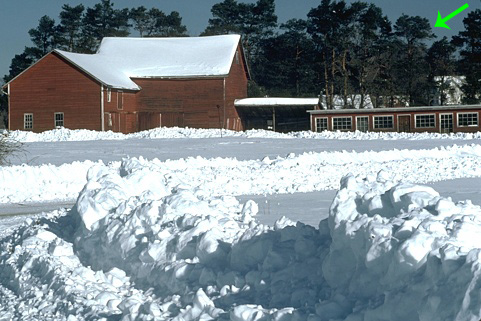}
			\subcaption*{\small{Input}}
		\end{subfigure}
		\begin{subfigure}{0.246\linewidth}
		\includegraphics[width=1\linewidth]{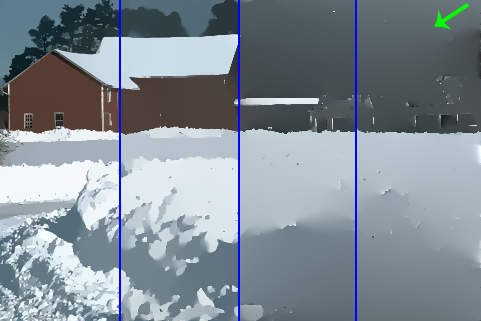}
		\subcaption*{\small{L0}}
	\end{subfigure}
	\begin{subfigure}{0.246\linewidth}
	\includegraphics[width=1\linewidth]{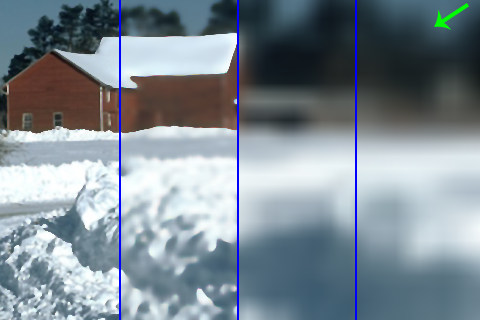}
	\subcaption*{\small{RGF}}
\end{subfigure}
\begin{subfigure}{0.246\linewidth}
	\includegraphics[width=1\linewidth]{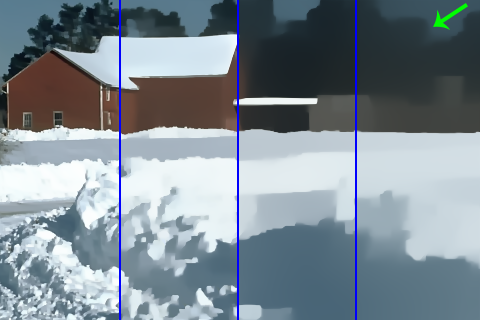}
	\subcaption*{\small{RTV}}
\end{subfigure}\\

		\begin{subfigure}{0.246\linewidth}
	\includegraphics[width=1\linewidth]{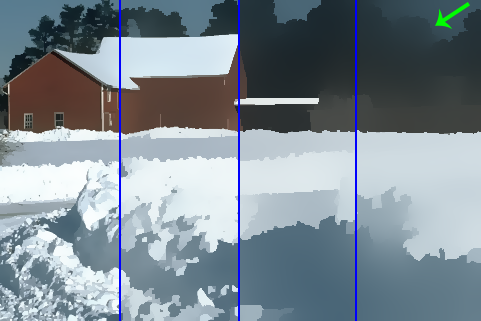}
	\subcaption*{\small{muGIF}}
\end{subfigure}
\begin{subfigure}{0.246\linewidth}
	\includegraphics[width=1\linewidth]{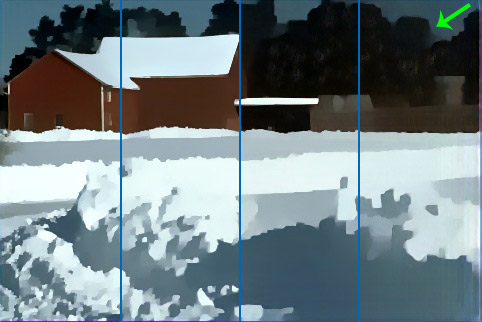}
	\subcaption*{\small{PIO}}
\end{subfigure}
\begin{subfigure}{0.246\linewidth}
	\includegraphics[width=1\linewidth]{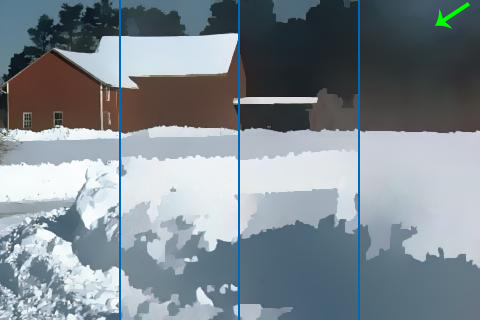}
	\subcaption*{\small{Ours-S}}
\end{subfigure}
\begin{subfigure}{0.246\linewidth}
	\includegraphics[width=1\linewidth]{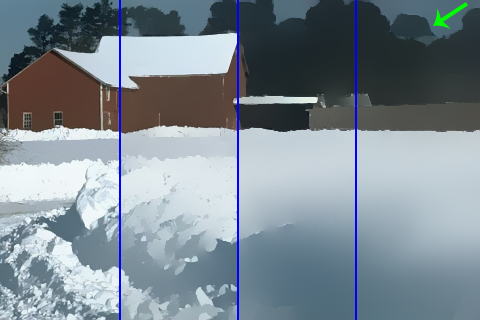}
	\subcaption*{\small{Ours}}
\end{subfigure}
\vspace{-10pt}
\caption{Visual comparison with state-of-the-art methods on scale-space filtering. }
		\label{com}
\centering
		\begin{subfigure}{0.33\linewidth}
	\includegraphics[width=1\linewidth]{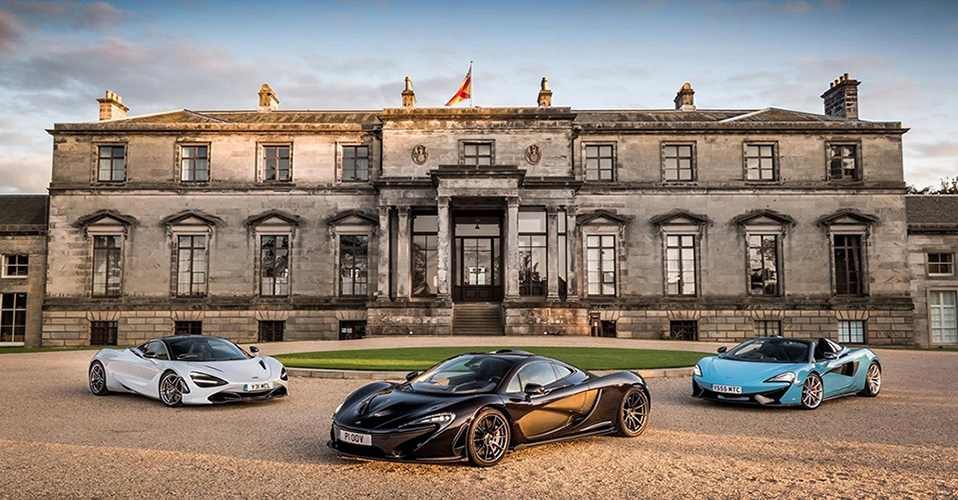}
	\subcaption{Input}
\end{subfigure}
\begin{subfigure}{0.33\linewidth}
	\includegraphics[width=1\linewidth]{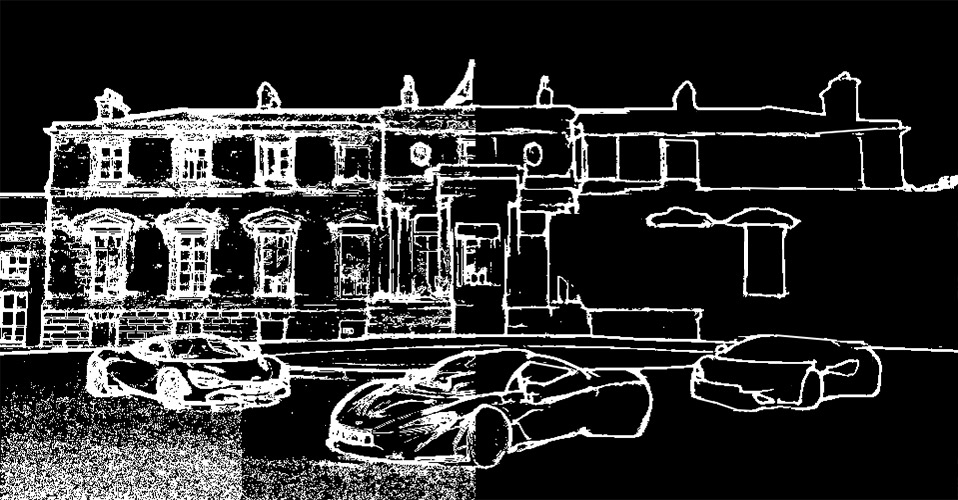}
	\subcaption{Guidance map}
\end{subfigure}
\begin{subfigure}{0.33\linewidth}
	\includegraphics[width=1\linewidth]{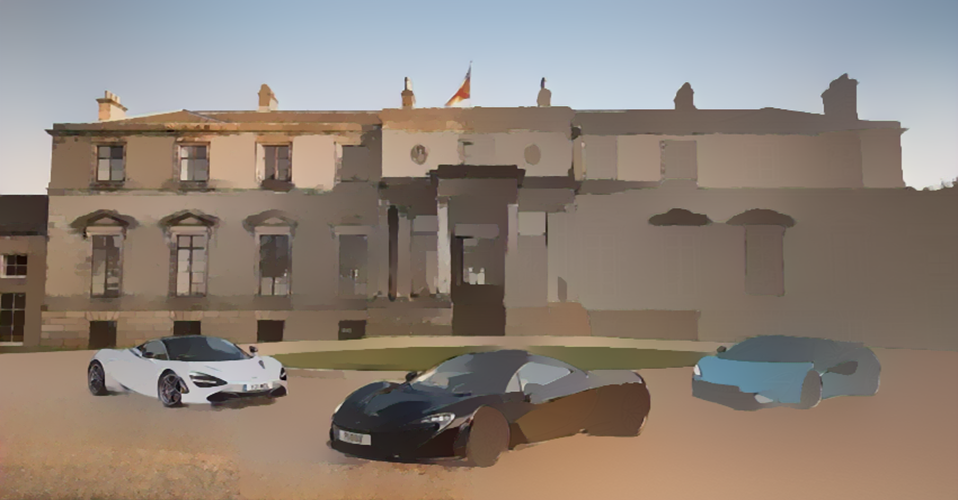}
	\subcaption{Our result}
\end{subfigure}

\vspace{-10pt}
	\caption{Flexibility verification by dealing with a manually-edited guidance map, which exhibits a gradually-changed scale in spatial. Only our HIPe can successfully obtain a peeling result strictly adhering to the guidance map.}
	\label{variededge}
\end{figure*}
	
	\subsection{Network Training}

    For different scales, $I^{t}$ are the processed results by setting different $\alpha^t$s.  The $\alpha^t$ corresponding to $G^t$ gradually increases through setting $\alpha^{t+1}\leftarrow \eta\times\alpha^{t}$ ($t\in\{1,2,...,T\}, \eta \ge 0$). 
    Please notice that the HIPe-Net can recur as many iterations as required, not limited to $T$. Besides, the interval of filtering controlled by $\alpha^t$ in the training phase can also be adjusted according to particular demands.
	
	The objective consists of guider consistency, peeler reconstruction, peeler preservation, and peeler consistency, the formulation of which is uniform for both supervised and unsupervised manners. The only difference is whether the ground truth $I^{t}_{\mathscr{F}}$ is available. For the supervised peeling, $I^{t}_{\mathscr{F}}$ and $\nabla{I^{t}_{\mathscr{F}}}$ are procurable where $I^{t}_{\mathscr{F}}$ is allowed to be generated by any existing filter. For the unsupervised peeling, the ground truth $I^t_{\mathscr{F}}$ is unavailable, thus we have $I^{t}_{\mathscr{F}}\defeq I^0=I$ and $\nabla{I^{t}_{\mathscr{F}}} \approx \nabla{\hat{I}^{t}_{\mathscr{F}}} \defeq (1-\alpha^t ) \nabla{\hat{I}^{t-1}_{\mathscr{F}}} + \alpha^t\nabla{\hat{I}^{t-1}_{\mathscr{F}}}\circ G^{gr}$ with $\nabla{\hat{I}^{1}_{\mathscr{F}}} \defeq (1-\alpha^1)\nabla{I^{0}} + G^{gr}$, where $\circ$ denotes the Hadamard product, $\alpha^t \leq 1$ and $G^{gr}$ aims to enhance important edges and is manually annotated. For ease of explanation, we uniformly use $\nabla{I^{t}_{\mathscr{F}}}$ to represent $\nabla{\hat{I}^{t}_{\mathscr{F}}}$ in the following.\vspace{5pt}\\
	\noindent\textit{Guider consistency} is to enforce the predicted guidance map $G^t$ to be consistent with $I^t_{\mathscr{F}}$ in the gradient domain. The guider consistency loss is as follows:
	\begin{equation}
	\mc{L}^{\mathscr{G}}_{con} \defeq \sum_{t=1}^T \|G^t \circ \overline{\nabla I^t_{\mathscr{F}}}\|_1 + \beta_g\|\overline{G^t} \circ \nabla I^t_{\mathscr{F}}\|_1,
	\label{indi}
	\end{equation}
	where $\overline{\nabla I^t_{\mathscr{F}}}\defeq\bs{1}-\nabla I^t_{\mathscr{F}}$, $\overline{G^t}\defeq\bs{1}-G^t$, $\|\cdot\|_1$ means the $\ell_1$ norm, and $\bs{1}$ denotes the all-one matrix with a compatible size. Further, $\beta_g$ is a constant parameter for balancing the two terms. 
This loss determines if a location of $G^t$ is an edge element or not by comparing the magnitudes of $\overline{\nabla I^t_{\mathscr{F}}}$ and $\beta_g\nabla I^t_{\mathscr{F}}$.\\


	\noindent\textit{Peeler reconstruction} desires the output $I^t$ and $I^{t}_{\mathscr{F}}$ to be as close as possible. A reconstruction loss is adopted:
	\begin{equation}
	\mc{L}^{\mathscr{P}}_{rec} \defeq \sum_{t=1}^T{||I^t -I^{t}_{\mathscr{F}}||_2^2},
	\end{equation}
	where $\|\cdot\|_2$ designates the $\ell_2$ norm.\\

%
    \noindent\textit{Peeler preservation} aims to maintain the structural pixels in $I^t$, corresponding to the pixels with values close to 1 in $G^t$. The gradient responses naturally reflect structure information of an image, thus the peeler preservation loss is defined as the distance between the gradient responses of $I^{t}_{\mathscr{F}}$ and $I^t$ as below:
    	\begin{equation}
	\mc{L}^{\mathscr{P}}_{pre} \defeq \sum_{t=1}^T{||G^t \circ \nabla I^t - G^t \circ \nabla I^{t}_{\mathscr{F}}||_2^2}.
	\end{equation}

    \noindent\textit{Peeler consistency} strictly constrains the peeling process to be in line with the edge map $G^t$. Inspired by \cite{muGIF}, the peeler consistency loss suppresses the gradient magnitude of each pixel in $I^t$ with different strengths. It gives a small penalty on the structural pixels indicated by $G^t$ while a large one on the textural pixels, which is expressed as:
    	\begin{equation}
	\mc{L}^{\mathscr{P}}_{con} \defeq \sum_{t=1}^T{||\frac{\nabla I^t}{\nabla I^{t}_{\mathscr{F}} \circ G^t + \epsilon}||_2^2}.
	\end{equation}
	where $\epsilon=0.005$ is used to avoid division by zero. To make the training procedure more stable and accelerate the convergence speed, the HIPe-Peeler and HIPe-Guider are trained independently. Since the peeler needs to be trained with $G^t$ given, we first learn the guider part by using $\lambda^{\mathscr{G}}_{con}\mc{L}^{\mathscr{G}}_{con}$ only, then freeze the guider and train the peeler using $\mc{L}^{\mathscr{P}}_{rec}+\lambda^{\mathscr{P}}_{pre}\mc{L}^{\mathscr{P}}_{pre}+\lambda^{\mathscr{P}}_{con}\mc{L}^{\mathscr{P}}_{con}$.
%

\section{Experimental Validation}
\label{sec:Exp}

Our model is implemented in PyTorch. All the experiments are carried out on a machine with a GeForce RTX 2080Ti GPU and an Intel Core i7-8700 3.20 GHZ CPU. The optimizer exerts the RMSprop algorithm whose learning rate is set to 0.001 at the beginning and linearly decreases with the increase of epochs. The images are all resized to 256 $\times$ 256 at the training stage and can be any size at the testing. We denote by Ours-S and Ours the supervised ($I^{t}_{\mathscr{F}}$ generated by muGIF \cite{muGIF}) and unsupervised settings in this section, respectively. The weights are set to $\lambda^{\mathscr{G}}_{con} = 1.5$, $\lambda^{\mathscr{P}}_{pre}=0.4$, and $\lambda^{\mathscr{P}}_{con}=4$. The training data for HIPe-Guider are from the BSDS500 dataset \cite{BSDS500}, and those for HIPe-Peeler are the natural images from the RTV \cite{RTV} and ADE20K datasets \cite{ade20k} (3,120 images in total).

\begin{table}[t]
	\centering
	\resizebox{0.48\textwidth}{!}{
		
	\begin{tabular}{c|cccccc}
	\hline
	Method &	\footnotesize{L0}&\footnotesize{RGF}&\footnotesize{SD}&\footnotesize{RTV}&\footnotesize{realLS}&\footnotesize{muGIF}\\
	\hline
	GCC ($\times 10^{-2}$) &	0.49&0.49&0.61&0.48&0.49&0.48\\
	\hline
	\hline
	Method &	\footnotesize{enBF}&\footnotesize{DEAF}&\footnotesize{FIP}&\footnotesize{PIO}&\footnotesize{Ours-S}&\footnotesize{Ours}\\
	\hline
	GCC ($\times 10^{-2}$)&	0.50&0.46&0.48&0.47&0.46&\textbf{0.45}\\
	\hline
\end{tabular}
}

	\caption{Quantitative comparison in GCC. The smoothing degrees are controlled around a similar level for aligning different methods. The best  results are highlighted in bold. Lower GCC values indicate better performance.}
	\vspace{5pt}
	\label{corr}

		\resizebox{0.48\textwidth}{!}{
	\begin{tabular}{c|cccccc}
		\hline
	Method &	\footnotesize{L0}&\footnotesize{RGF}&\footnotesize{SD}&\footnotesize{RTV}&\footnotesize{L1}&\footnotesize{realLS}\\
		\hline
	Time &	{6.36}&{3.16}&{53.27}&{9.31}&{913.17}&{1.75}\\
		\hline
		\hline
	Method &	\footnotesize{muGIF}&\footnotesize{enBF}&\footnotesize{DEAF}&\footnotesize{FIP}&\footnotesize{PIO}&\footnotesize{Ours}\\
		\hline
	Time &	{11.12}&{\textit{1.21}}&{5.13}$^\dagger$&{0.034}$^\dagger$&\textbf{0.008}$^\dagger$&{\textbf{0.41}/\textit{0.016}$^\dagger$}\\
		\hline
	\end{tabular}
}
\caption{Runtime comparison on processing a 1080p image (1627 $\times$ 1080) \textit{in seconds}. The CPU times are unmarked and GPU times are marked by  $\dagger$, respectively.}
	\label{speed}
\end{table}

\subsection{Peeler Evaluation}
This part is to test the abilities of different methods in terms of edge preserving filtering.
Traditional methods including L0 \cite{L0}, RTV \cite{RTV}, RGF \cite{RGF}, SD \cite{2017Robust}, L1 \cite{2015L1}, muGIF \cite{muGIF}, realLS \cite{RWLS}, and enBF \cite{LiuZCSHY20}, as well as deep learning approaches including DEAF \cite{DCNN}, FIP \cite{2017Fast}, and PIO \cite{fan2019decouple} are involved in comparisons. The codes of competitors we use are all provided by the authors. To measure the image quality, the gradient correlation coefficient, {\it i.e.} $\text{GCC}(P,I)\defeq\text{mean}(\|\nabla P\circ\nabla I\|_1$),  is adopted, indicating the uncorrelation in the gradient domain and theoretically supported by \cite{2017Exclusivity}. The evaluation for GCC is performed on the 200 images from the test set of BSDS500 \cite{UCM}. In addition, the running time is considered to reflect the efficiency.

\begin{figure}[t]
	\centering
	\includegraphics[width=0.95\linewidth]{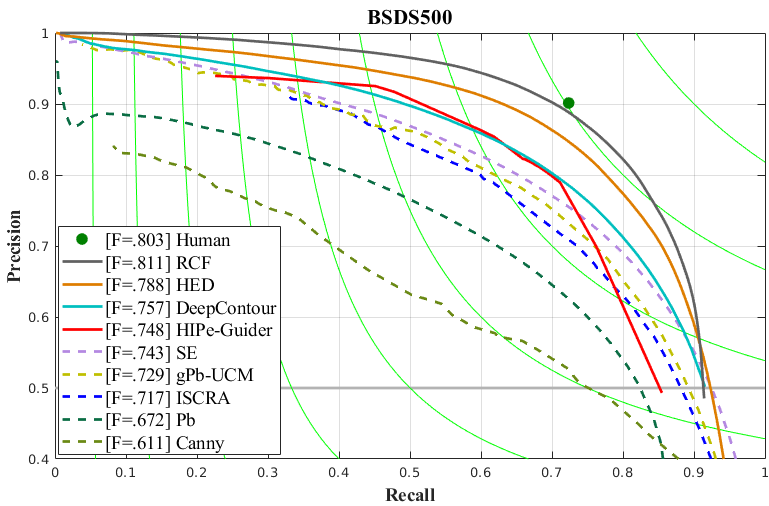}
	\caption{Edge detection comparison in terms of precision-recall curve on the BSDS500 dataset.} 
	
	\label{roc}
	\vspace{3pt}
	\centering
	\begin{subfigure}{0.32\linewidth}
		\includegraphics[width=1\linewidth]{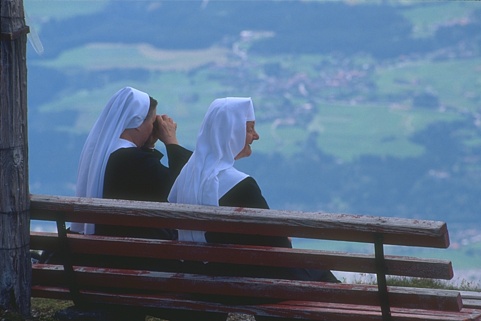}
		\subcaption{Input}
	\end{subfigure}
	\begin{subfigure}{0.32\linewidth}
		\includegraphics[width=1\linewidth]{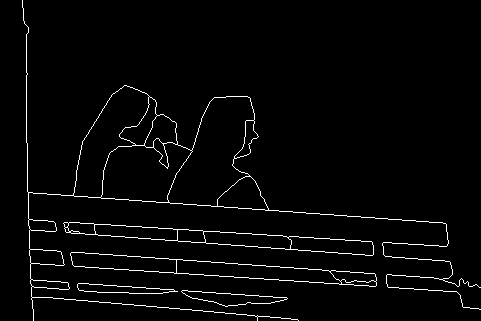}
		\subcaption{GT edge}
	\end{subfigure}
	\begin{subfigure}{0.32\linewidth}
		\includegraphics[width=1\linewidth]{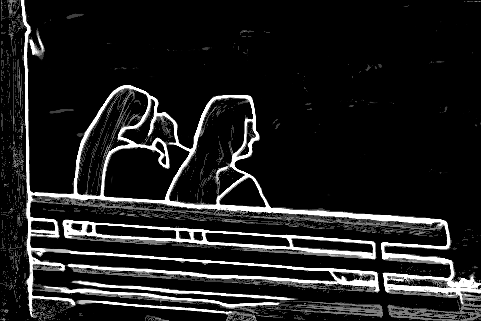}
		\subcaption{Our result}
	\end{subfigure}
	\vspace{-5pt}
	\caption{An example of our edge confidence map.}
	\vspace{-10pt}
	\label{edge_confidence}
\end{figure}

We tweak the hyper-parameters of the competitors to reach a similar smoothing degree for fairness. Table \ref{corr} reports the numerical results in GCC, and Figure \ref{com} depicts a visual comparison among the competitors. As can be seen from the numbers, our HIPe takes the first place in terms of GCC, indicating that $\nabla I$ and $\nabla P$ considerably satisfy the orthogonality. As for the visual effect, we observe that the visual quality of the results by L0, RGF and PIO is relatively poor when the smoothing degree increases, and PIO suffers from noticeable color shift issue. Though performing remarkably better, RTV and muGIF fail to completely smooth or preserve some regions, like the right corner of first case, and the border of flowers. In comparison, our approach can achieve visually pleasing filtering results in terms of region flattening and edge preserving, thanks to the ability of our peeler to strictly adhere to the unambiguous edge guidance.

Additionally, the time costs of different methods at inference stage are displayed in Table \ref{speed}. Clearly, our method is much more efficient than the traditional methods even on a CPU. The HIPe-Peeler and PIO can reach super real-time speeds to handle 1080p images on a 2080Ti GPU. Moreover, both the traditional and deep models, except for our HIPe, can hardly process images with spatially-variant scales and/or user provided/edited guidance maps, as shown in Fig. \ref{variededge}, which would remarkably broaden the applicability of filtering/peeling. By simultaneously considering the peeling quality, efficiency and flexibility, our method is among the most attractive choices for practical use.

\subsection{Guider Evaluation}
This section evaluates the ability of guidance prediction. Due to the scale-space nature, our HIPe-Guider can easily construct an edge confidence map for an image through combining the edge maps obtained from different recurrent steps. Specifically, we train the HIPe-Guider using annotated edges in the BSDS500 dataset as $G^{gr}$ and recursively run it for 24 rounds. The maps are simply averaged, which are all processed by non maximum suppression. An example is shown in Figure \ref{edge_confidence}. We compare the results of edge detection on the BSDS500 dataset using precision-recall curve and F-score. The competing candidates are non-deep methods, including Canny \cite{canny}, Pb \cite{pb}, gPb-UCM \cite{UCM}, ISCRA \cite{ISCRA}, and SE \cite{SE}, and deep methods, including DeepContour \cite{deepcontour}, HED \cite{HED} and  RCF \cite{RCF}. As shown in Figure \ref{roc}, our method obtains the ODS-F (optimal dataset scale F-score) of 0.748, being obviously superior to Canny and Pb, competitive with DeepContour and even slightly better than ISCRA, gPb-USM and SE. Compared with the deep supervised HED and RCF, there indeed exists a margin. This is reasonable because \emph{we do not introduce any pre-trained classification network as backbone. Our HIPe-Guider (998KB) is significantly smaller than DeepContour (27.5Mb), HED (56.1Mb) and RCF (56.5Mb)}.

\begin{figure}
	\begin{subfigure}{0.24\linewidth}
		\includegraphics[width=1\linewidth]{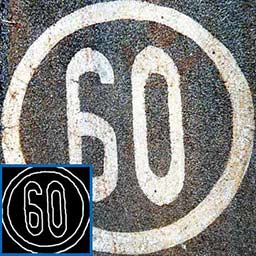}
        \includegraphics[width=1\linewidth]{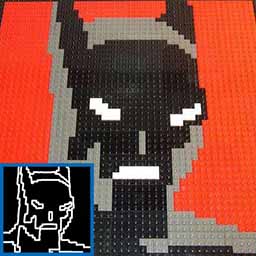}
		\subcaption{Input/Edge}
	\end{subfigure}
	\begin{subfigure}{0.24\linewidth}
		\includegraphics[width=1\linewidth]{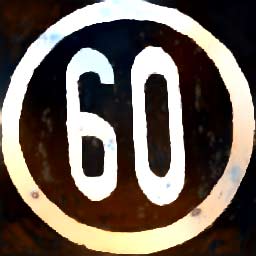}
        \includegraphics[width=1\linewidth]{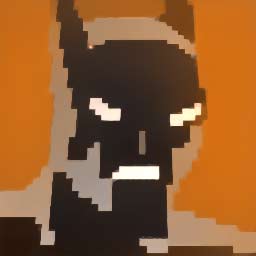}
		\subcaption{w/o Resi.}
	\end{subfigure}
	\begin{subfigure}{0.24\linewidth}
		\includegraphics[width=1\linewidth]{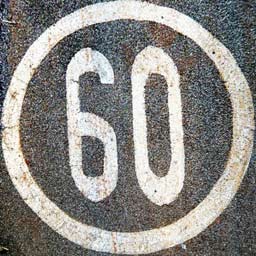}
        \includegraphics[width=1\linewidth]{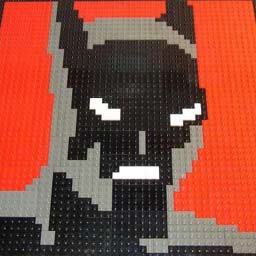}
		\subcaption{ w/o $\mc{L}^{\mathscr{P}}_{con}$}
	\end{subfigure}
	\begin{subfigure}{0.24\linewidth}
		\includegraphics[width=1\linewidth]{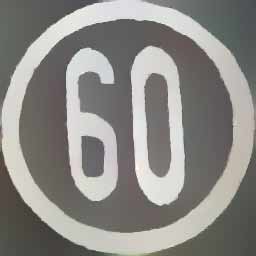}
        \includegraphics[width=1\linewidth]{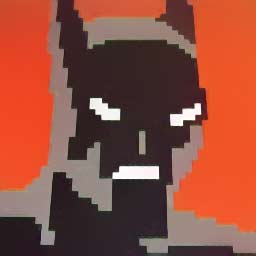}
		\subcaption{Ours}
	\end{subfigure}
	
	\vspace{-5pt}
	\caption{Effectiveness analysis on different strategies.}
	\vspace{-7pt}
	\label{ablation}
\end{figure}

%
%

\subsection{Other Issues}
\textbf{Ablation analysis.}
The edge predictor in HIPe-Guider only involves the guider consistency loss $\mc{L}^{\mathscr{G}}_{con}$. Thus, we omit the ablation analysis from the loss perspective. As for the peeling manner, the peeler has two possible ways to map the input $I^{t-1}$ to one component. In this work, we incline to the mapping of $C_{t}\leftarrow\mathscr{P}(I^{t-1}, G^{t})$. The reason is that the information of $C_{t}$ is less and distributed simpler to be quickly recovered than that of $I^t$, which is corroborated by Figure \ref{ablation} (b) and (d). As can be clearly viewed, the result by directly mapping to $I^{t}$ as shown in Figure \ref{ablation} (b) suffers from the issue of color attenuation. Further, we test the effect of removing the peeler consistency loss $\mc{L}^{\mathscr{P}}_{con}$ . The corresponding results are given in Figure \ref{ablation} (c), revealing that the textures cannot be smoothed out without adopting $\mc{L}^{\mathscr{P}}_{con}$. The peeler for ablation analysis is trained in an unsupervised manner.\\

\begin{table}[t]
	\centering
	\resizebox{0.48\textwidth}{!}{
		
	\begin{tabular}{c|ccccc}
	\hline
	  &	ECSSD &PASCAL-S &HKU-IS  &SOD  &DUTS-TE \\
	\hline
    \hline
    HS \cite{hsaliency} &0.226&0.260&0.213&0.280&0.241\\

    HS \cite{hsaliency} + Ours &0.128&0.197 &0.123&0.209 &0.159\\
    \hline
    PoolNet \cite{poolnet} &0.039&0.074&0.032&0.100&0.039\\

    PoolNet \cite{poolnet} + Ours &0.034&0.071&0.026&0.096&0.036\\
    \hline
	CSF \cite{sal100K} &	0.033&0.068&0.030&0.098&0.037\\

	CSF \cite{sal100K} + Ours &	0.028&0.065&0.025&0.094&0.034\\
	\hline
	EGNet \cite{EGnet} &	0.040&0.074&0.031&0.097&0.039\\

    EGNet \cite{EGnet} + Ours &	0.033&0.071&0.026&0.097&0.036\\
	\hline
\end{tabular}
}
%
%
%
%
%

\caption{ Quantitative comparison in MAE. The lower the MAE, the better. \textit{Name + Ours} refers to predict from the saliency maps generated by \textit{Name}.}
	\label{saliency}
\end{table}

%
%
%
%
%
%

\begin{figure}
\centering
	\begin{subfigure}{0.48\linewidth}
		\includegraphics[width=1\linewidth]{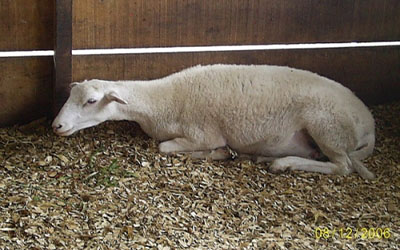}
		\subcaption{Input}
	\end{subfigure}
	\begin{subfigure}{0.48\linewidth}
		\includegraphics[width=1\linewidth]{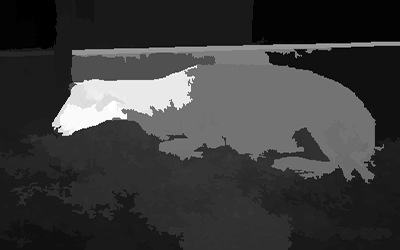}
		\subcaption{HS}
	\end{subfigure}
	\begin{subfigure}{0.48\linewidth}
		\includegraphics[width=1\linewidth]{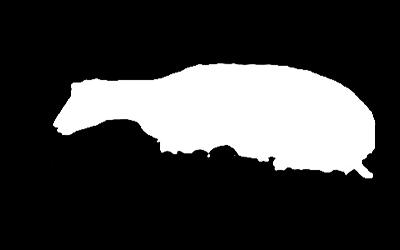}
		\subcaption{ HS+Ours}
	\end{subfigure}
		\begin{subfigure}{0.48\linewidth}
		\includegraphics[width=1\linewidth]{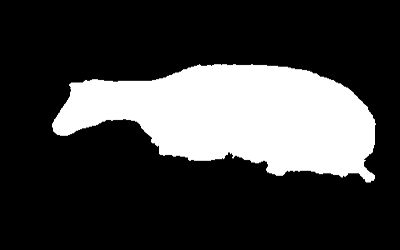}
		\subcaption{GT saliency}
	\end{subfigure}
	
	\caption{Visual comparison between HS and HS + Ours.}
	\vspace{-12pt}
	\label{saliency_fig}
\end{figure}

\textbf{Applications.}
For fully utilizing our multi-scale nature, we propose a novel strategy to apply our framework to the task of saliency detection. We first utilize an existing model to separately perform saliency detection on a raw image and its four filtering results (5 in total) generated by our framework, then train a lightweight network (merely 91KB) on the DUTS-TR dataset \cite{DUTS} to predict a better saliency map from these five saliency detection results. We follow \cite{SF, sal100K, EGnet,poolnet} to use $\text{MAE}(S^o, S^{gt}) := \text{mean}(| S^o - S^{gt}|)$ as the evaluation matric, and evaluate our method on public datasets: ECSSD \cite{hsaliency}, PASCAL-S \cite{PASCAL}, HKU-IS \cite{HKU-IS}, SOD \cite{SOD}, and DUTS-TE \cite{DUTS}. The quantitative and qualitative results are shown in Table \ref{saliency} and Figure \ref{saliency_fig}, respectively, proving that our framework is effective for the salient object detection since some useful features may be prominent at different scales. Single scale smoothing may not produce better results, \textit{e.g.} some salient structures might be over-smoothed in large-scales. Note that for all saliency detection methods, the input is identical and no extra information is used. The proposed image diasssenbly/peeling produces several levels for capturing different cues.

Besides saliency detection, our framework is flexible to promote the performance of many other tasks, such as image enhancement \cite{MSR}, image editing \cite{abstraction}, mutual structure discovery \cite{JFMS} and image segmentation \cite{UCM}. For example, a multi-scale Retinex based low-light enhancement case is provided in Figure \ref{app} (a-c) by substituting HIPe for the Gaussian filter in \cite{MSR}, and a multi-scale image abstraction \cite{abstraction} example in (d-f). Figure \ref{app} (g-i) show that we allow the reference (depth image for generating edge map) and target (RGB) images from different sources.

\begin{figure}
	\centering
	\begin{subfigure}{0.32\linewidth}
\includegraphics[width=1\linewidth]{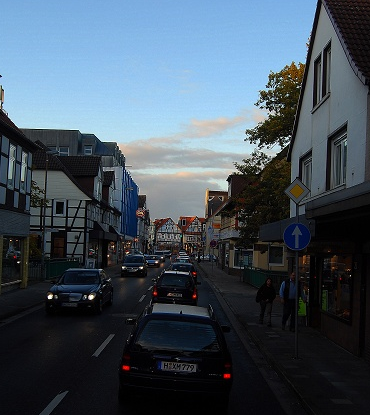}
\subcaption{Input}
\end{subfigure}
\begin{subfigure}{0.32\linewidth}
\includegraphics[width=1\linewidth]{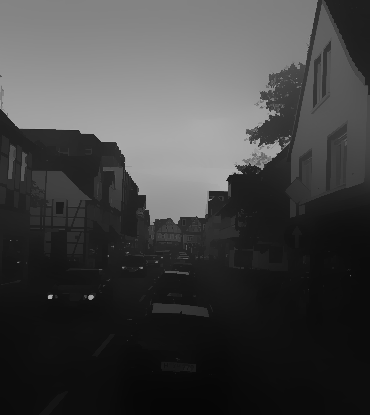}
\subcaption{Illumination}
\end{subfigure}
\begin{subfigure}{0.32\linewidth}
\includegraphics[width=1\linewidth]{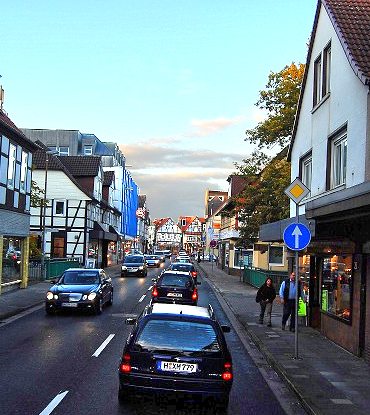}
\subcaption{Enhanced result}
\end{subfigure}\\
	\begin{subfigure}{0.32\linewidth}
	\includegraphics[width=1\linewidth]{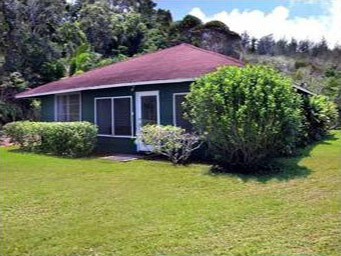}
	\subcaption{Input}
\end{subfigure}
\begin{subfigure}{0.32\linewidth}
	\includegraphics[width=1\linewidth]{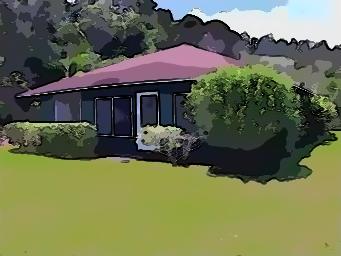}
	\subcaption{Abstraction 1}
\end{subfigure}
\begin{subfigure}{0.32\linewidth}
	\includegraphics[width=1\linewidth]{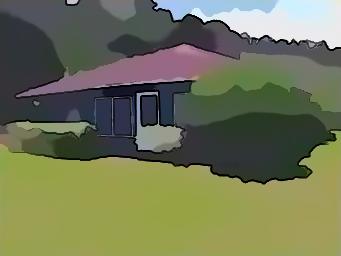}
	\subcaption{Abstraction 2}
\end{subfigure}\\
\begin{subfigure}{0.32\linewidth}
\includegraphics[width=1\linewidth]{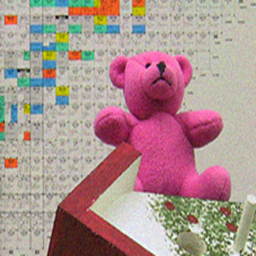}
\subcaption{Color input}
\end{subfigure}
\begin{subfigure}{0.32\linewidth}
\includegraphics[width=1\linewidth]{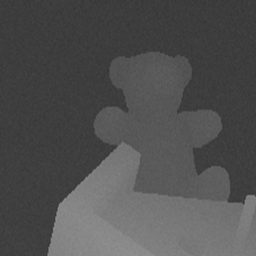}
\subcaption{Depth reference}
\end{subfigure}
\begin{subfigure}{0.32\linewidth}
\includegraphics[width=1\linewidth]{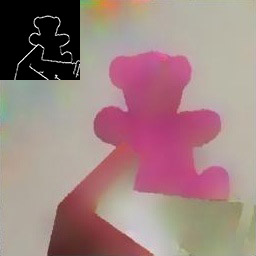}
\subcaption{Smoothed result}
\end{subfigure}

\caption{The three rows correspond to the applications of low-light enhancement, image abstraction, and depth-guided RGB image filtering, respectively.}
\vspace{-15pt}
\label{app}
\end{figure}
\section{Concluding Remarks}
This paper has proposed a modern framework for hierarchically organizing images. A flexible and compact recurrent network, namely hierarchical image peeling net, has been developed based on theoretical analysis to efficiently and effectively fulfill the task, which jointly takes into account the peeling hierarchy, structure preservation, flexibility, and model efficiency, making it attractive for practical use. The network  can be trained in both supervised and unsupervised manners. Experimental results have been provided to demonstrate the advantages of our design. Our framework also has much potential to derive new applications and inspire new technical lines to solving existing problems, such as object detection and image segmentation.
{\small
\bibliographystyle{ieee_fullname}
\bibliography{reference2}
}

\end{document}